%
\documentclass{llncs}
\usepackage{makeidx}  
\usepackage{amssymb}
\usepackage{amsmath}
\usepackage{graphicx}
\usepackage[ruled]{algorithm2e}

\usepackage{url}
\urldef{\mailsa}\path|gergely.neu@gmail.com, bartok@inf.ethz.ch|
\newcommand{\keywords}[1]{\par\addvspace\baselineskip
\noindent\keywordname\enspace\ignorespaces#1}

\usepackage{bm}
\usepackage[disable]{todonotes}
\usepackage{natbib}

\definecolor{PalePurp}{rgb}{0.66,0.57,0.66}
\newcommand{\todoG}[1]{\todo[color=PalePurp!30]{#1}}

\newcommand{\loss}{\ell}
\newcommand{\hloss}{\hat{\ell}}

\newcommand{\real}{\mathbb{R}}
\newcommand{\Sw}{\mathcal{S}}

\newcommand{\II}[1]{\mathbb{I}\left\{#1\right\}}
\newcommand{\PP}[1]{\mathbb{P}\left[#1\right]}
\newcommand{\EE}[1]{\mathbb{E}\left[#1\right]}

\newcommand{\PPc}[2]{\mathbb{P}\left[#1\left|#2\right.\right]}

\newcommand{\PPcc}[2]{\mathbb{P}\left[\left.#1\right|#2\right]}

\newcommand{\EEc}[2]{\mathbb{E}\left[#1\left|#2\right.\right]}
\newcommand{\EEcc}[2]{\mathbb{E}\left[\left.#1\right|#2\right]}

\newcommand{\ev}[1]{\left\{#1\right\}}
\newcommand{\F}{\mathcal{F}}

\newcommand{\tI}{\widetilde{I}}
\newcommand{\tp}{\tilde{p}}
\newcommand{\tq}{\tilde{q}}

\renewcommand{\th}{\ensuremath{^{\mathrm{th}}}}
\DeclareMathOperator*{\argmin}{\arg\min}
\newcommand{\hL}{\wh{L}}
\newcommand{\p}{p}

\newcommand{\bV}{\boldsymbol{V}}
\newcommand{\be}{\boldsymbol{e}}
\newcommand{\bK}{\boldsymbol{K}}

\newcommand{\bv}{\boldsymbol{v}}

\newcommand{\bz}{\boldsymbol{z}}
\newcommand{\bloss}{\bm\ell}
\newcommand{\bL}{\boldsymbol{L}}

\newcommand{\tZ}{\widetilde{\bZ}}
\newcommand{\tV}{\widetilde{\bV}}
\newcommand{\hbl}{\hat{\bloss}}

\newcommand{\hbL}{\wh{\bL}}

\newcommand{\bp}{\boldsymbol{p}}
\newcommand{\bZ}{\boldsymbol{Z}}

\newcommand{\wh}{\widehat}
\newcommand{\wt}{\widetilde}

\newcommand{\fpl}{{FPL}}
\newcommand{\gr}{{GR}}
\newcommand{\exph}{{Exp3}}
\newcommand{\green}{{Green}}

\begin{document}

\mainmatter  

\title{An Efficient Algorithm for Learning with Semi-Bandit Feedback}

\titlerunning{An efficient algorithm for learning with semi-bandit feedback}

%
%
\author{Gergely Neu\inst{1,2} \and G\'abor Bart\'ok\inst{3}}
\authorrunning{Gergely Neu \and G\'abor Bart\'ok}

\institute{Department of Computer Science and Information Theory\\
       Budapest University of Technology and Economics\\
\and MTA SZTAKI Institute for Computer Science and Control\\
       \email{gergely.neu@gmail.com}
\and Department of Computer Science, ETH Z\"urich\\\email{bartok@inf.ethz.ch}}

%
%

\maketitle

\begin{abstract}
We consider the problem of online combinatorial optimization under semi-bandit feedback. The goal of the learner is to sequentially select its actions from a combinatorial decision set so as to minimize its cumulative loss. We propose a learning algorithm for this problem based on combining the Follow-the-Perturbed-Leader (\fpl) prediction method with a novel loss estimation procedure called Geometric Resampling (\gr). Contrary to previous solutions, the resulting algorithm can be efficiently implemented for any decision set where efficient offline combinatorial optimization is possible at all. Assuming that the elements of the decision set can be described with $d$-dimensional binary vectors with at most $m$ non-zero entries, we show that the expected regret of our algorithm after $T$ rounds is $O(m\sqrt{dT\log d})$. As a side result, we also improve the best known regret bounds for \fpl\, in the full information setting to $O(m^{3/2}\sqrt{T\log d})$, gaining a factor of $\sqrt{d/m}$ over previous bounds for this algorithm.
\keywords{Follow-the-perturbed-leader, bandit problems, online learning, combinatorial optimization}
\end{abstract}

\section{Introduction}
In this paper, we consider a special case of online linear optimization known as online combinatorial optimization (see Figure~\ref{fig:protocol}). In every time step $t=1,2,\dots,T$ of this sequential decision problem, the learner chooses an \emph{action} $\bV_t$ from the finite action set $\Sw\subseteq\ev{0,1}^d$, where $\left\|\bv\right\|_1\le m$ holds for all $\bv\in\Sw$. At the same time, the environment fixes a loss vector $\bloss_t\in[0,1]^d$ and the learner suffers loss $\bV_t^\top\bloss_t$. We allow the loss vector $\bloss_t$ to depend on the previous decisions $\bV_1,\dots,\bV_{t-1}$ made by the learner, that is, we consider \emph{non-oblivious} environments. The goal of the learner is to minimize the cumulative loss
$\sum_{t=1}^T\bV_t^\top\bloss_t$.
Then, the performance of the learner is measured in terms of the total expected \emph{regret}
\begin{equation}\label{eq:regret}
R_T= \max_{\bv\in\Sw} \EE{\sum_{t=1}^T\left(\bV_t - \bv\right)^\top\bloss_t} = \EE{\sum_{t=1}^T\bV_t^\top\bloss_t} - \min_{\bv\in\Sw} \EE{\sum_{t=1}^T\bv^\top\bloss_t},
\end{equation}
Note that, as indicated in Figure~\ref{fig:protocol}, the learner chooses its actions randomly, hence the expectation.
\begin{figure}[t]
\fbox{
\begin{minipage}{\textwidth}
{\bfseries Parameters}: set of decision vectors $\Sw = \left\{\bv(1),\bv(2),\dots,\bv(N)\right\}\subseteq\ev{0,1}^d$,
number of rounds $T$; \\
{\bfseries For all $t=1,2,\dots,T$, repeat}
\begin{enumerate}
\item The learner chooses a probability distribution $\bp_t$ over $\left\{1,2,\dots,N\right\}$.
\item The learner draws an action $I_t$ randomly according to $\bp_t$. Consequently, the learner plays decision vector $\bV_t = \bv(I_t)$.
\item The environment chooses loss vector $\bloss_{t}$.
\item The learner suffers loss $\bV_t^\top\bloss_{t}$.
\item The learner observes some feedback based on $\bloss_t$ and $\bV_t$.
\end{enumerate}
\end{minipage}
}
\caption{The protocol of online combinatorial optimization.}
\label{fig:protocol}
\end{figure}

The framework described above is general enough to accommodate a number of interesting problem instances such as path planning, ranking and matching problems, finding minimum-weight spanning trees and cut sets. Accordingly, different versions of this general learning problem have drawn considerable attention in the past few years. These versions differ in the amount of information made available to the learner after each round $t$. In the simplest setting, called the \emph{full-information} setting, it is assumed that the learner gets to observe the loss vector $\bloss_t$ regardless of the choice of $\bV_t$. However, this assumption does not hold for many practical applications, so it is more interesting to study the problem under \emph{partial information}, meaning that the learner only gets some limited feedback based on its own decision. In particular, in some problems it is realistic to assume that the learner observes the vector $\left(V_{t,1}\loss_{t,1},\dots,V_{t,d}\loss_{t,d}\right)$, where $V_{t,i}$ and $\loss_{t,i}$ are the $i\th$ components of the vectors $\bV_t$ and $\bloss_t$, respectively. This information scheme is called \emph{semi-bandit} information. An even more challenging variant is the \emph{full bandit} scheme where all the learner observes after time $t$ is its own loss $\bV_t^\top \bloss_t$.

The most well-known instance of our problem is the (adversarial) \emph{multi-armed bandit} problem considered in the seminal paper of \citet{auer2002bandit}: in each round of this problem, the learner has to select one of $N$ arms and minimize regret against the best fixed arm, while only observing the losses of the chosen arm. In our framework, this setting corresponds to setting $d=N$ and $m=1$, and assuming either full bandit or semi-bandit feedback. Among other contributions concerning this problem, \citeauthor{auer2002bandit} propose an algorithm called \exph\,(Exploration and Exploitation using Exponential weights) based on constructing loss estimates $\hloss_{t,i}$ for each component of the loss vector and playing arm $i$ with probability proportional to $\exp(-\eta \sum_{s=1}^{t-1} \hloss_{s,i})$ at time $t$ ($\eta>0$)\footnote{In fact, \citeauthor{auer2002bandit} mix the resulting distribution with a uniform distribution over the arms with probability $\gamma>0$. However, this modification is not needed when one is concerned with the total expected regret, see e.g., \citet[Chapter~15]{BaPaSzSz11-online}.}. This algorithm is known as the Exponentially Weighted Average (EWA) forecaster in the full information case. Besides proving that the total expected regret of this algorithm is $O(\sqrt{NT\log N})$,  \citeauthor{auer2002bandit} also provide a general lower bound of $\Omega(\sqrt{NT})$ on the regret of any learning algorithm on this particular problem. This lower bound was later matched by the Implicitly Normalized Forecaster (INF) of \citet{audibert09minimax,audibert10inf} by using the same loss estimates in a more refined way.

The most popular example of online learning problems with actual combinatorial structure is the shortest path problem first considered by \citet{TW03} in the full information scheme.
The same problem was considered by \citet{gyorgy07sp}, who proposed an algorithm that works with semi-bandit information. Since then, we have come a long way in understanding the ``price of information'' in online combinatorial optimization---see \citet{audibert12regret} for a complete overview of results concerning all of the presented information schemes. The first algorithm directly targeting general online combinatorial optimization problems is due to \citet{KWK10}: their method named Component Hedge guarantees an optimal regret of $O(m\sqrt{T\log d})$ in the full information setting. In particular, this algorithm is an instance of the more general algorithm class known as Online Stochastic Mirror Descent (OSMD) or Follow-The-Regularized-Leader (FTRL) methods. \citet{audibert12regret} show that OSMD/FTRL-based methods can also be used for proving optimal regret bounds of $O(\sqrt{mdT})$ for the semi-bandit setting. Finally, \citet{bubeck12bandit} show that the natural extension of the EWA forecaster (coupled with an intricate exploration scheme) can be applied to obtain a $O(m^{3/2}\sqrt{dT\log d})$ upper bound on the regret when assuming full bandit feedback. This upper bound is off by a factor of $\sqrt{m\log d}$ from the lower bound proved by \citet{audibert12regret}. For completeness, we note that the EWA forecaster attains a regret of $O(m^{3/2}\sqrt{T\log d})$ in the full information case and $O(m\sqrt{dT\log d})$ in the semi-bandit case.

While the results outlined above suggest that there is absolutely no work left to be done in the full information and semi-bandit schemes, we get a different picture if we restrict our attention to \emph{computationally efficient} algorithms. First, methods based on exponential weighting of each decision vector can only be efficiently implemented for a handful of decision sets $\Sw$---see \citet{KWK10} and \citet{CL12} for some examples. Furthermore, as noted by \citet{audibert12regret}, OSMD/FTRL-type methods can be efficiently implemented by convex programming if the convex hull of the decision set can be described by a polynomial number of constraints. Details of such an efficient implementation are worked out by \citet{suehiro12submodular}, whose algorithm runs in $O(d^6)$ time, which can still be prohibitive in practical problems. While \citet{KWK10} list some further examples where OSMD/FTRL can be implemented efficiently, we conclude that results concerning general efficient methods for online combinatorial optimization are lacking for (semi or full) bandit information problems.

The Follow-the-Perturbed-Leader (\fpl) prediction method (first proposed by \citet{Han57} and later rediscovered by \citet{KV05}) method offers a computationally efficient solution for the online combinatorial optimization problem given that the \emph{static} combinatorial optimization problem $\min_{\bv\in\Sw} \bv^\top\bloss$ admits computationally efficient solutions for any $\bloss\in\real^{d}$. \fpl, however, is usually relatively overlooked due to many ``reasons'', some of them listed below:
\begin{itemize}
\item The best known bound for \fpl\, in the full information setting is $O(m\sqrt{dT})$, which is worse than the bounds for both EWA and OSMD/FTRL. However, this result was recently improved to $O(m^2\sqrt{T\mathop{\textmd{polylog}} d})$ by \citet{devroye13rwalk}.
\item It is commonly believed that the standard proof techniques for \fpl\, do not apply directly against adaptive adversaries (see, e.g, the comments of \citet[Section~2.3]{audibert12regret} or \citet[Section~4.3]{CBLu06:Book}). On the other hand, a direct analysis for non-oblivious adversaries is given by \citet{Pol05} in the multi-armed bandit setting.
\item Considering bandit information, no efficient \fpl-style algorithm is known to achieve a regret of $O(\sqrt{T})$. \citet{AweKlein04} and \citet{McMaBlu04} proposed \fpl-based algorithms for learning with full bandit feedback in shortest path problems, and proved $O(T^{2/3})$ bounds on the regret \eqref{eq:regret}. \citet{Pol05} proved bounds of $O(\sqrt{NT\log N})$ in the $N$-armed bandit setting, however, the proposed algorithm requires $O(T^2)$ computations per time step.
\end{itemize}
In this paper, we offer an \emph{efficient \fpl-based algorithm for regret minimization under semi-bandit feedback}. Our approach relies on a novel method for estimating components of the loss vector. The method, called \emph{geometric resampling} (\gr), is based on the idea that the reciprocal of the probability of an event can be estimated by measuring the reoccurrence time. We show that the regret of \fpl\, coupled with \gr\, attains a regret of $O(m\sqrt{dT\log d})$ in the semi-bandit case.  To the best of our knowledge, our algorithm is the first computationally efficient learning algorithm for this learning problem. As a side result, we also improve the regret bounds of \fpl\, in the full information setting to $O(m^{3/2}\sqrt{T\log d})$, that is, we close the gaps between the performance bounds of \fpl\, and EWA under both full information and semi-bandit feedback.

%

\section{Loss estimation by geometric resampling}\label{sec:resamp}
For a gentle start, consider the problem of regret minimization in $N$-armed bandits where $d=N$, $m=1$ and the learner has access to the basis vectors $\ev{\be_i}_{i=1}^d$. In each time step, the learner specifies a distribution $\bp_t$ over the arms, where $p_{t,i} = \PPcc{I_t=i}{\F_{t-1}}$, where $\F_{t-1}$ is the history of the learner's observations and choices up to the end of time step $t-1$.
Most bandit algorithms rely on feeding some loss estimates to a black-box prediction algorithm.
It is commonplace to consider loss estimates of the form
\begin{equation}\label{eq:oldest}
\hloss_{t,i} = \frac{\loss_{t,i}}{\p_{t,i}} \II{I_t=i},
\end{equation}
where $\p_{t,i} = \PPc{I_t = i}{\F_{t-1}}$, where $\F_{t-1}$ is the history of observations and internal random variables used by the algorithm up to time $t-1$. It is very easy to show that $\hloss_{t,i}$ is an unbiased estimate of the loss $\loss_{t,i}$ for all $t,i$ such that $p_{t,i}$ is positive. For all other $i$ and $t$, $\EEcc{\hloss_{t,i}}{\F_{t-1}} = 0 \le \loss_{t,i}$.

To our knowledge, all existing bandit algorithms utilize some version of the loss estimates described above. While for many algorithms (such as the \exph\, algorithm of \cite{auer2002bandit} and the \green\, algorithm of \cite{allenberg06hannan}), the probabilities $p_{t,i}$ are readily available and the estimates \eqref{eq:oldest} can be computed efficiently, this is not necessarily the case for all algorithms. In particular, \fpl\, is notorious for not being able to handle bandit information efficiently since the probabilities $p_{t,i}$ cannot be expressed in closed form. To overcome this difficulty, we propose a different loss estimate that can be efficiently computed \emph{even when $p_{t,i}$ is not available for the learner}.

The estimation procedure executed after each time step $t$ is described below.

\vspace{.25cm}
\makebox[\textwidth][c]{
\fbox{
\begin{minipage}{\textwidth/2}
\begin{enumerate}
\item The learner draws $I_t\sim \bp_t$.
\item For $n=1,2,\dots$
\begin{enumerate}
\item Let $n\leftarrow n+1$.
\item Draw $I'_t(n) \sim \bp_t$.
\item If $I'_t(n) = I_t$, break.
\end{enumerate}
\item Let $K_t = n$.
\end{enumerate}
\end{minipage}}}
\vspace{.25cm}

\noindent Clearly, $K_t$ is a geometrically distributed random variable given $I_t$ and $\F_{t-1}$. Consequently, we have $\EEc{K_t}{\F_{t-1},I_t} = 1/\p_{t,I_t}$.
We use this property to construct the estimates
\begin{equation}\label{eq:newest}
\hloss_{t,i} = \loss_{t,i} \II{I_t=i} K_t
\end{equation}
for all arms $i$.
We can easily show that the above estimate is conditionally unbiased whenever $p_{t,i}>0$:
\[
\begin{split}
\EEcc{\hloss_{t,i}}{\F_{t-1}} &= \sum_{j} p_{t,j} \EEcc{\hloss_{t,i}}{\F_{t-1},I_t=j}
\\
&= p_{t,i} \EEc{\loss_{t,i} K_t}{\F_{t-1},I_t=i}
\\
&= p_{t,i} \loss_{t,i} \EEc{K_t}{\F_{t-1},I_t=i}
\\
&= \loss_{t,i}.
\end{split}
\]
Clearly $\EEcc{\hloss_{t,i}}{\F_{t-1}} = 0$ still holds whenever $p_{t,i}=0$.

The main problem with the above sampling procedure is that its worst-case running time is unbounded: while the expected number of necessary samples $K_t$ is clearly $N$, the actual number of samples might be much larger. To overcome this problem, we maximize the number of samples by $M$ and use $\tilde{K_t} = \min\ev{K_t,M}$ instead of $K_t$ in \eqref{eq:newest}. While this capping obviously introduces some bias, we will show later that for appropriate values of $M$, this bias does not hurt the performance too much.
%

\section{An efficient algorithm for learning with semi-bandit feedback}\label{sec:alg}
\begin{algorithm}[t]
\caption{FPL with GR}\label{alg:FPLGR}
\textbf{Input}: $\Sw = \left\{\bv(1),\bv(2),\dots,\bv(N)\right\}\subseteq\ev{0,1}^d$,
 $\eta\in\real^+$, $M\in\mathbb{Z}^{+}$\;
  \textbf{Initialization}: $\hbL(1)=\cdots=\hbL(d)=0$\;
  \For{t=1,\dots,T}
  {
  Draw $\bZ(1),\dots,\bZ(d)$ independently from distribution $\text{Exp}(\eta)$\;
  Choose action $I = \displaystyle\argmin_{i\in\ev{1,2,\dots,N}} \ev{\bv(i)^\top \left(\hbL - \bZ\right)}$\;
  $K(1)=\dots=K(d)=M$\;
  $k=0$; \tcc*[f]{Counter for reoccurred indices}\\
  \For(\tcc*[f]{Geometric Resamplig}){n=1,\dots ,M-1}
  {
    Draw $\bZ'(1),\dots,\bZ'(d)$ independently from distribution $\text{Exp}(\eta)$\;
    $I(n)=\displaystyle\argmin_{i\in\ev{1,2,\dots,N}} \ev{\bv(i)^\top \left(\hbL - \bZ'\right)}$\;
    \For{j=1,\dots,d}
    {
      \If{$v(I(n))(j)=1 \ \&\  K(j)=M$}
      {
       $K(j)=n$\;
       $k=k+1$\;
       \lIf(\tcc*[f]{All indices reoccurred}){$k=\bigl\|v(I)\bigr\|_1$}{break}
      }
    }
  }
  \lFor(\tcc*[f]{Update}){j=1,\dots,d}
  {
    $\hbL(j)=\hbL(j)+K(j)v(I)(j)\ell(j)$
  }
  }
\end{algorithm}
First, we generalize the geometric resampling method for constructing loss estimates in the semi-bandit case. To this end, let $p_{t,i} = \PPc{I_t=i}{\F_{t-1}}$ and $q_{t,j} = \EEc{V_{t,j}}{\F_{t-1}}$. First, the learner plays the decision vector with index $I_t\sim \bp_t$. Then, it draws $M$ additional indices $I_t'(1),I_t'(2),\dots,I_t'(M)\sim \bp_t$ independently of each other and $I_t$. For each $j=1,2,\dots,d$, we define the random variables
\[
K_{t,j} = \min\ev{1\le s \le M: v_j(I'_t(s)) = v_j(I_t)},
\]
with the convention that $\min\ev{\emptyset}=M$.
We define the components of our loss estimates $\hbl_t$ as
\begin{align}\label{eq:combest}
\hloss_{t,j} = K_{t,j} V_{t,j} \loss_{t,j}
\end{align}
for all $j=1,2,\dots,d$.
Since $V_{t,j}$ are nonzero only for coordinates for which $\loss_{t,j}$ is observed, these estimates are well-defined. Letting $\hbL_t = \sum_{s=1}^t \hbl_s$, at time step $t$ the algorithm draws the components of the perturbation vector $\bZ_t$ independently from an exponential distribution with parameter $\eta$ and selects the index
\[
I_t = \argmin_{i\in\ev{1,2,\dots,N}} \ev{\bv(i)^\top \left(\hbL_{t-1} - \bZ_t\right)}.
\]
As noted earlier, the distribution $\bp_t$, while implicitly specified by $\bZ_t$ and the estimated cumulative losses $\hbL_t$, cannot be expressed in closed form for \fpl. However, sampling the indices $I_t'(1),I_t'(2),\dots,I_t'(M)$ can be carried out by drawing additional perturbation vectors $\bZ_t'(1),\bZ_t'(2),\dots,\bZ_t'(M)$ independently from the same distribution as $\bZ_t$. We emphasize that the above additional indices are never actually played by the algorithm, but are only necessary for constructing the loss estimates. We also note that in general, drawing as much as $M$ samples is usually not necessary since the sampling procedure can be terminated as soon as the values of $K_{t,i}$ are fixed for all $i$ such that $V_{t,i}=1$. We point the reader to Section~\ref{sec:runningtime} for a more detailed discussion of the running time of the sampling procedure.

Pseudocode for the algorithm can be found in Algorithm~\ref{alg:FPLGR}.
We start analyzing our method by proving a simple lemma on the bias of the estimates.
\begin{lemma}\label{lem:bias}
For all $j\in\ev{1,2,\dots,d}$ and $t=1,2,\dots,T$ such that $q_{t,j}>0$, the loss estimates \eqref{eq:combest} satisfy
\[
\EEcc{\hloss_{t,j}}{\F_{t-1}} = \left(1-(1-q_{t,j})^M \right) \loss_{t,j}.
\]
\end{lemma}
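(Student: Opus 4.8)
The plan is to isolate the observed loss $\loss_{t,j}$ from the resampling machinery and reduce the claim to a purely combinatorial computation of $\EEcc{K_{t,j}V_{t,j}}{\F_{t-1}}$. Since the environment commits to $\bloss_t$ before $\bV_t$ is drawn, $\loss_{t,j}$ is $\F_{t-1}$-measurable and can be pulled outside the conditional expectation, leaving
\[
\EEcc{\hloss_{t,j}}{\F_{t-1}} = \loss_{t,j}\,\EEcc{K_{t,j}V_{t,j}}{\F_{t-1}}.
\]
Because $V_{t,j}=v_j(I_t)\in\ev{0,1}$, the product $K_{t,j}V_{t,j}$ vanishes unless $V_{t,j}=1$, so I would split the expectation according to the value of $V_{t,j}$ and track only the event $\ev{V_{t,j}=1}$, which by definition of $q_{t,j}$ has conditional probability $q_{t,j}$.

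The heart of the argument is identifying the conditional law of $K_{t,j}$ on $\ev{V_{t,j}=1}$. First I would observe that the auxiliary indices $I_t'(1),\dots,I_t'(M)$ are drawn i.i.d.\ from $\bp_t$ independently of $I_t$, so conditioning on $v_j(I_t)=1$ leaves their distribution unchanged. Consequently, for each $s$ the matching event $\ev{v_j(I_t'(s))=v_j(I_t)=1}$ occurs independently with probability $\PPcc{v_j(I_t'(s))=1}{\F_{t-1}}=q_{t,j}$, and $K_{t,j}$ is exactly the index of the first success in a Bernoulli$(q_{t,j})$ sequence, truncated at $M$ via the convention $\min\ev{\emptyset}=M$. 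I would then compute its mean through the tail-sum identity $\EE{K_{t,j}\mid V_{t,j}=1}=\sum_{s=1}^{M}\PROB[K_{t,j}\ge s\mid V_{t,j}=1]$; since $\PROB[K_{t,j}\ge s\mid V_{t,j}=1]=(1-q_{t,j})^{s-1}$ (no match among the first $s-1$ draws), the geometric sum collapses to $(1-(1-q_{t,j})^M)/q_{t,j}$.

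Combining the two steps gives $\EEcc{K_{t,j}V_{t,j}}{\F_{t-1}}=q_{t,j}\cdot(1-(1-q_{t,j})^M)/q_{t,j}=1-(1-q_{t,j})^M$, which upon reinserting $\loss_{t,j}$ yields the claimed identity. The main obstacle, and the step I would treat most carefully, is the conditional distribution computation: one must justify that conditioning on $V_{t,j}=1$ (rather than on the full realization of $I_t$) already fixes each matching probability at $q_{t,j}$, since the match condition depends on $I_t$ only through $v_j(I_t)$. One must also absorb the truncation correctly — the event ``no match in $M$ draws'' and the event ``first match at the $M$-th draw'' both contribute to $K_{t,j}=M$ — which the tail-sum formulation handles cleanly without any boundary case analysis.
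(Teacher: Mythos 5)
Your proposal is correct and follows essentially the same route as the paper: both reduce the claim to $q_{t,j}\loss_{t,j}\,\EEc{K_{t,j}}{\F_{t-1},V_{t,j}=1}$ and identify $K_{t,j}$, conditionally on $V_{t,j}=1$, as a geometric random variable with parameter $q_{t,j}$ truncated at $M$. The only difference is in how the truncated mean is evaluated --- you use the tail-sum identity $\sum_{s=1}^{M}(1-q)^{s-1}$, while the paper subtracts the overshoot $\sum_{n=M}^{\infty}(n-M)(1-q)^{n-1}q$ from the untruncated geometric mean --- and both computations yield $\bigl(1-(1-q)^M\bigr)/q$.
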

\begin{proof}
Fix any $j,t$ satisfying the condition of the lemma. By elementary calculations,
\[
\begin{split}
\EEcc{\hloss_{t,j}}{\F_{t-1}} &= q_{t,j} \loss_{t,j} \EEcc{K_{t,j}}{\F_{t-1},V_{t,j}=1}.
\end{split}
\]
Setting $q = q_{t,j}$ for simplicity, we have
\[
\begin{split}
\EEc{K_{t,j}}{\F_{t-1},V_{t,j}=1} =&
\sum_{n=1}^\infty n (1-q)^{n-1} q - \sum_{n=M}^\infty (n - M) (1-q)^{n-1} q
\\
=&
\sum_{n=1}^\infty n (1-q)^{n-1} q - (1-q)^M \sum_{n=m}^\infty (n - M) (1-q)^{n-M-1} q
\\
=&
\left(1-(1-q)^M \right)\sum_{n=1}^\infty n (1-q)^{n-1} q = \frac{1-(1-q)^M }{q}.
\end{split}
\]
Putting the two together proves the statement.\qed
\end{proof}
The following theorem gives an upper bound on the total expected regret of the algorithm.

\begin{theorem}\label{thm:main}
The total expected regret of \fpl\, with geometric resampling satisfies
\[
R_n \le \frac{m\left(\log d+1\right)}{\eta} + \eta mdT + \frac{dT}{eM}
\]
under semi-bandit information. In particular, setting $\eta = \sqrt{\left(\log d+1\right)/(dT)}$ and $M\ge \sqrt{dT}/(em\sqrt{\log d+1})$, the regret can be upper bounded as
\[
R_n \le 3m\sqrt{dT\left(\log d+1\right)}.
\]
\end{theorem}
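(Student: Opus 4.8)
The plan is to reduce the true regret to the regret measured against the estimated losses $\hbl_t$, paying for the truncation at $M$ through the bias computed in Lemma~\ref{lem:bias}. Writing $q_{t,j} = \EEcc{V_{t,j}}{\F_{t-1}}$ and using that $\bloss_t$ is $\F_{t-1}$-measurable (the adversary is non-oblivious but sees only the past), we have $\EEcc{\bV_t^\top\bloss_t}{\F_{t-1}} = \sum_j q_{t,j}\loss_{t,j}$, so $R_n = \EE{\sum_t\bigl(\sum_j q_{t,j}\loss_{t,j} - \bv^\top\bloss_t\bigr)}$ for any fixed comparator $\bv\in\Sw$. I would then split each per-round term by the algebraic identity
\[
\sum_j q_{t,j}\loss_{t,j} - \bv^\top\bloss_t = \sum_j q_{t,j}(\loss_{t,j} - \hloss_{t,j}) + \Bigl(\sum_j q_{t,j}\hloss_{t,j} - \bv^\top\hbl_t\Bigr) + \bv^\top(\hbl_t - \bloss_t),
\]
producing, after summation and expectation, an algorithm-side bias term, the ``idealized'' \fpl\ regret on the estimates, and a comparator-side bias term.

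The two bias terms are disposed of using Lemma~\ref{lem:bias}. For the comparator term, $\EEcc{\bv^\top\hbl_t}{\F_{t-1}} = \sum_j v_j\bigl(1-(1-q_{t,j})^M\bigr)\loss_{t,j} \le \bv^\top\bloss_t$, so this contribution is nonpositive and may be dropped. For the algorithm-side term, the same lemma gives $\EEcc{\sum_j q_{t,j}(\loss_{t,j}-\hloss_{t,j})}{\F_{t-1}} = \sum_j q_{t,j}(1-q_{t,j})^M\loss_{t,j}$, and the elementary inequality $\max_{q\in[0,1]} q(1-q)^M \le 1/(eM)$ (from $q(1-q)^M\le qe^{-qM}$, maximized at $q=1/M$) together with $\loss_{t,j}\le 1$ bounds this by $d/(eM)$ per round, i.e.\ $dT/(eM)$ in total --- the third term of the claimed bound.

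The core of the argument is bounding the idealized \fpl\ regret $\EE{\sum_t(\sum_j q_{t,j}\hloss_{t,j} - \bv^\top\hbl_t)}$ against the nonnegative estimates (nonnegativity follows since $K_{t,j},V_{t,j},\loss_{t,j}\ge0$). I would run the standard \fpl\ route, introducing the look-ahead leader $\tilde\bV_{t+1} = \argmin_i\bv(i)^\top(\hbL_t - \bZ_t)$ and splitting into a ``be-the-leader'' part and a ``stability'' part. The be-the-leader part telescopes and is controlled by the range of the perturbation: since $\|\bv(i)\|_1\le m$ and the maximum of $d$ independent $\mathrm{Exp}(\eta)$ variables has expectation at most $(\log d+1)/\eta$, this yields $m(\log d+1)/\eta$, the first term. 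The stability part relies on a multiplicative-stability property of exponential perturbations, roughly $\tilde p_{t+1,i}\ge p_{t,i}\,e^{-\eta\bv(i)^\top\hbl_t}$, coming from the memorylessness of the exponential law, which gives $\sum_i(p_{t,i}-\tilde p_{t+1,i})\bv(i)^\top\hbl_t\le\eta\sum_i p_{t,i}(\bv(i)^\top\hbl_t)^2$. Applying Cauchy--Schwarz with $\|\bv(i)\|_1\le m$ to write $(\bv(i)^\top\hbl_t)^2\le m\sum_j v_j(i)\hloss_{t,j}^2$ and summing over actions (recall $\sum_i p_{t,i}v_j(i)=q_{t,j}$) reduces this to $\eta m\sum_j q_{t,j}\hloss_{t,j}^2$.

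It then remains to control $\EEcc{\hloss_{t,j}^2}{\F_{t-1}}$. Since $K_{t,j}$ given $V_{t,j}=1$ is geometric with success probability $q_{t,j}$ (capped at $M$), its second moment is at most $2/q_{t,j}^2$, so $\EEcc{\hloss_{t,j}^2}{\F_{t-1}} = \loss_{t,j}^2\,q_{t,j}\EEcc{K_{t,j}^2}{\F_{t-1},V_{t,j}=1}\le 2/q_{t,j}$, whence $\eta m\sum_j q_{t,j}\EEcc{\hloss_{t,j}^2}{\F_{t-1}}\le 2\eta m d$ per round; this is the $\eta m dT$ term, and the truncation at $M$ is precisely what keeps it finite. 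The main obstacle is this stability step: establishing the multiplicative-stability inequality for exponential perturbations in the combinatorial setting, where a single vector $\bZ_t$ is shared across all $d$ coordinates so the actions are correlated, and keeping the conditioning correct against the non-oblivious adversary --- $\hbl_t$ depends on the same internal randomness that selects $\bV_t$, so the expectations must be taken in the right order. Finally, adding the three bounds and substituting $\eta=\sqrt{(\log d+1)/(dT)}$ together with the stated lower bound on $M$ makes each term equal to $m\sqrt{dT(\log d+1)}$, giving $3m\sqrt{dT(\log d+1)}$.
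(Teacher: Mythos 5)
Your overall architecture matches the paper's proof: the same three-way split into an algorithm-side bias term, an idealized \fpl\ regret on the estimates, and a comparator-side bias term; the same bound $\sum_k q_{t,k}(1-q_{t,k})^M \le d/(eM)$ via $q e^{-Mq}\le 1/(eM)$; the same be-the-leader bound $m(\log d+1)/\eta$; and the same multiplicative-stability form for the middle term. The problem is that the one step you explicitly leave open --- the inequality $\tp_{t,i}\ge \tp_{t-1,i}\,e^{-\eta\,\bv(i)^\top\hbl_t}$ --- is precisely the crux of the whole proof, and ``memorylessness of the exponential law'' alone does not deliver it. A plain shift of the density gives only $\tp_{t,i}\ge \tp_{t-1,i}\,e^{-\eta\|\hbl_t\|_1}$, and $\|\hbl_t\|_1$ can be of order $dM$ rather than $m$-local, which destroys the bound. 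The paper's resolution has two ingredients you are missing: (i) replace $\hbl_t$ by the \emph{sparse} loss vector $\hbl'_t(i)$ with components $v_k(i)\hloss_{t,k}$, so that the shift in the change-of-variables argument costs only $e^{\eta\|\hbl'_t(i)\|_1}=e^{\eta\,\bv(i)^\top\hbl_t}$; and (ii) prove a monotonicity lemma (Lemma~\ref{lem:modified-loss}) showing that the probability of $i$ being the leader under the sparsified loss is no larger than under the true loss, which is what lets the sparse vector stand in for $\hbl_t$ (and which is where nonnegativity of the estimates enters). Relatedly, your look-ahead leader $\wt{\bV}_{t+1}$ reuses the \emph{same} perturbation $\bZ_t$ that selected $\bV_t$; conditioned on $\F_t$ that perturbation is no longer exponentially distributed, which is exactly the conditioning trap you worry about. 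The paper's fix is to define the look-ahead with a \emph{fresh} independent perturbation $\tZ$, so that $\tV_{t-1}$ is conditionally independent of $\hbl_t$ given $\F_{t-1}$ while having the same conditional law as $\bV_t$.

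A second, quantitative issue: your bound on the stability term via Cauchy--Schwarz and the second moment of the truncated geometric variable gives $\eta m\sum_j q_{t,j}\EEcc{\hloss_{t,j}^2}{\F_{t-1}}\le 2\eta m d$ per round, i.e.\ $2\eta m dT$ in total, so with your stated tuning the three terms sum to $4m\sqrt{dT(\log d+1)}$, not the claimed $3m\sqrt{dT(\log d+1)}$. The paper avoids the factor $2$ by never invoking second moments of $K_{t,j}$: it expands $\bigl(\bv(i)^\top\hbl_t\bigr)^2$ into cross terms, uses the conditional independence of the look-ahead action from $(\bV_t,\bK_t)$ to get $\EEcc{\wt{V}_{t-1,k}K_{t,k}}{\F_{t-1}}\le 1$, then bounds $\sum_k V_{t,k}\loss_{t,k}\le m$ and $\EEcc{\hloss_{t,j}}{\F_{t-1}}\le 1$, yielding $\eta m d$ per round. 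So even granting the stability inequality, your argument as written proves a weaker constant than the one stated in the theorem.
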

Note that regret bound stated above holds for any non-oblivious adversary since the decision $I_t$ only depends on the previous decisions $I_{t-1},\dots,I_1$ through the loss estimates $\hbl_{t-1},\dots,\hbl_1$. While the main ingredients of the proof presented below are rather common (we borrow several ideas from \citet{Pol05}, the proofs of Theorems~3 and~8 of \citet{audibert12regret} and the proof of Corollary~4.5 in \citet{CBLu06:Book}), these elements are carefully combined in our proof to get the desired result.

\begin{proof}
Let $\tZ$ be a perturbation vector drawn independently from the same distribution as $\bZ_1$ and
\[
\tI_t = \argmin_{i\in\ev{1,2,\dots,N}} \ev{\bv(i)^\top\left(\hL_{t} - \tZ\right)}.
\]
In what follows, we will crucially use that $\tV_t=\bv(\tI_t)$ and $\bV_{t+1} = \bv(I_{t+1})$ {are conditionally independent and identically distributed given $\F_{s}$ for any $s\ge t$}. In particular, introducing the notations
\begin{align*}
q_{t,k} &= \EEcc{V_{t,k}}{\F_{t-1}}  & \tq_{t,k} &= \EEcc{\wt{V}_{t,k}}{\F_{t}}\\
p_{t,i} &= \PPcc{I_t=i}{\F_{t-1}}    & \tp_{t,i} &= \PPcc{\tI_t=i}{\F_{t}},
\end{align*}
we will exploit the above property by using $q_{t,k} = \tq_{t-1,k}$ and $p_{t,i} = \tp_{t-1,i}$ numerous times below.

First, let us address the bias of the loss estimates generated by \gr.
By Lemma~\ref{lem:bias}, we have that $\EEcc{\hloss_{t,k}}{\F_{t-1}}\le \loss_{t,k}$ for all $k$ and $t$, and thus $\EEcc{\bv^\top\hbl_t}{\F_{t-1}} \le \bv^\top \bloss_t$
holds for any fixed $\bv\in\Sw$.
Furthermore, we have
\[
\begin{split}
\EEcc{\tV_{t-1}^\top \hbl_t}{\F_{t-1}} &= \EEcc{\sum_{k=1}^d \wt{V}_{t-1,k} \hloss_{t,k}}{\F_{t-1}}
\\
&= \sum_{k=1}^d \tq_{t-1,k} \EEcc{\hloss_{t,k}}{\F_{t-1}}
\\
&= \sum_{k=1}^d \tq_{t-1,k} \left(1-(1-q_{t,k})^M \right) \loss_{t,k},
\end{split}
\]
where we used the fact that $\tV_{t-1}$ is independent of $\hbl_t$ in the second line and Lemma~\ref{lem:bias} in the last line. Now using that $\tq_{t-1,k} = q_{t,k}$ for all $k$ and $t$ and noticing that $\EEcc{\bV_t^\top\loss_t}{\F_{t-1}} = \sum_{k=1}^d q_{t,k} \loss_{t,k}$, we get that
\begin{equation}\label{eq:true}
\EEcc{\bV_t^\top\loss_t}{\F_{t-1}} \le \EEcc{\tV_{t-1}^\top \hbl_t}{\F_{t-1}} + \sum_{i=1}^d q_{t,k} (1-q_{t,k})^M.
\end{equation}
To control $\sum_k q_{t,k} (1-q_{t,k})^M$, note that $q_{t,k} (1-q_{t,k})^M \le q_{t,k} e^{-Mq_{t,k}}$.
Since $f(q) = qe^{-Mq}$ takes its maximum at $q=1/M$, we get
\[
\sum_{k=1}^d q_{t,k} (1-q_{t,k})^M \le \frac{d}{eM}.
\]

Using Lemma~3.1 of \citet{CBLu06:Book} (sometimes referred to as the
{\sl ``be-the-leader''} lemma) for the sequence $\left(\hbl_1 - \tZ,\hbl_2,\dots,\hbl_T\right)$, we obtain
\[
\sum_{t=1}^T \tV_t^\top \hbl_t - \tV_1^\top \tZ \le \sum_{t=1}^T \bv^\top \hbl_t - \bv^\top \tZ
\]
for any $v\in\Sw$. Reordering and taking expectations gives
\begin{equation}\label{eq:btl}
\begin{split}
\EE{\sum_{t=1}^T \left(\tV_t - \bv\right)^\top \hbl_t} \le \EE{\left(\tV_t - \bv\right)^\top\tZ} \le \frac{m\left(\log d+1\right)}{\eta},
\end{split}
\end{equation}
where we used $\EE{\left\|\bZ_{t}\right\|_{\infty}} \le \log d + 1$.
To proceed, we study the relationship between $\tp_{t,i}$ and $\tp_{t-1,i}=p_{t,i}$. To this end, we introduce the ``sparse loss vector'' $\hbl'_t(i)$ with components $\hloss'_{t,k}(i) = v_k(i) \hloss_{t,k}$ and
\[
\tI'_t(i) = \argmin_{i\in\ev{1,2,\dots,N}} \ev{\bv(i)^\top\left(\hbL_{t-1} + \hbl'_t(i) - \tZ\right)}.
\]
Using the notation $\tp'_{t,i} = \PPcc{\tI'_{t}(i)=i}{\F_{t}}$, we show in Lemma~\ref{lem:modified-loss} (stated and proved after the proof of the theorem) that $\tp'_{t,i}\le \tp_{t,i}$.\footnote{Note that a similar trick was used in the proof Corollary~4.5 in \citet{CBLu06:Book}. Also note that this trick only applies in the case of non-negative losses.}
Also, define
\[
J(\bz) = \argmin_{j\in\ev{1,2,\dots,N}} \ev{\bv(j)^\top\left(\hbL_{t-1} - \bz\right)}.
\]
Letting $f(\bz)$ be the density of the perturbations, we clearly have
\[
\begin{split}
\tp_{t-1,i}
&=\int\limits_{\bz\in[0,\infty]^d} \II{J(\bz)=i} f(\bz) \,d\bz
\\
&= e^{\eta \left\|\hbl'_{t}(i)\right\|_1} \int\limits_{\bz\in[0,\infty]^d} \II{J(\bz)=i} f\left(\bz+\hbl'_t(i)\right)  \,d\bz
\\
&= e^{\eta \left\|\hbl'_{t}(i)\right\|_1} \idotsint\limits_{z_i\in[\hbl'_{t,i},\infty]} \II{J\left(\bz-\hbl'_t(i)\right)=i} f(\bz)  \,d\bz
\\
&\le e^{\eta \left\|\hbl'_{t}(i)\right\|_1} \int\limits_{\bz\in[0,\infty]^d} \II{J\left(\bz-\hbl'_t(i)\right)=i} f(\bz)  \,d\bz
\\
&= e^{\eta \left\|\hbl'_{t}(i)\right\|_1} \tp'_{t,i} \le e^{\eta \left\|\hbl'_{t}(i)\right\|_1} \tp_{t,i},
\end{split}
\]
where we used $f(\bz) = \eta\exp(-\eta \|\bz\|_1)$ for $\bz\in[0,\infty]^d$.
Now notice that $\bigl\|\hbl'_{t}(i)\bigr\|_1 = \bv(i)^\top \hbl'_{t}(i) = \bv(i)^\top \hbl_{t}$, which yields
\[
\begin{split}
\tp_{t,i} &\ge \tp_{t-1,i} e^{-\eta \bv(i)^\top \hbl_{t}} \ge \tp_{t-1,i} \left(1 - \eta \bv(i)^\top \hbl_{t}\right).
\end{split}
\]
It follows that
\begin{equation}
\begin{split}\label{eq:pineq}
\EEcc{\tV_{t-1}^\top \hbl_t}{\F_{t}}
&= \sum_{i=1}^N \tp_{t-1,i} \bv(i)^\top \hbl_{t}
\le \sum_{i=1}^N \tp_{t,i} \bv(i)^\top \hbl_{t} + \eta \sum_{i=1}^N \tp_{t-1,i} \left(\bv(i)^\top \hbl_{t}\right)^2
\\
&= \EEcc{\tV_t^\top \hbl_t}{\F_{t}} + \eta \sum_{i=1}^N \tp_{t-1,i} \left(\bv(i)^\top \hbl_{t}\right)^2,
\end{split}
\end{equation}
where we used $\EEcc{\tV_{t-1}}{\F_t} = \EEcc{\tV_{t-1}}{\F_{t-1}}$ in the second equality.
Similarly to the proof of Theorem~8 of \citet{audibert12regret}, the last term can be upper bounded as
\[
\begin{split}
\EEcc{\sum_{i=1}^N \tp_{t-1,i} \left(\bv(i)^\top \hbl_{t}\right)^2}{\F_{t-1}} &=
\EEcc{\sum_{j=1}^d\sum_{k=1}^d \left(\wt{V}_{t-1,j}\hloss_{t,j}\right)\left(\wt{V}_{t-1,k}\hloss_{t,k}\right)}{\F_{t-1}}
\\
&\le
\EEcc{\sum_{j=1}^d\hloss_{t,j}
\sum_{k=1}^d \left(\wt{V}_{t-1,k}K_{t,k}V_{t,k}\loss_{t,k}\right)}{\F_{t-1}}
\\
&\le\EEcc{\sum_{j=1}^d\hloss_{t,j}
\sum_{k=1}^d V_{t,k}\loss_{t,k}}{\F_{t-1}}
\\
&\le m\EEcc{\sum_{j=1}^d\hloss_{t,j}}{\F_{t-1}} \le md,
\end{split}
\]
where we used that $\tV_{t-1}$ is independent of $\bV_{t}$, $\hbl_t$ and $\bK_{t}$, so $\EEcc{\wt{V}_{t-1,k} K_{t,k}}{\F_{t-1}}\le 1$ in the second inequality, and $\EEcc{\hloss_{t,j}}{\F_{t-1}} \le 1$ in the last inequality.
That is, we have proved
\begin{equation}\label{eq:peek}
\EE{\sum_{t=1}^T\tV_{t-1}^\top \hbl_t} \le \EE{\sum_{t=1}^T\tV_t^\top \hbl_t} + \eta md.
\end{equation}
Putting Equations~\eqref{eq:true}, \eqref{eq:btl} and~\eqref{eq:peek} together, we obtain
\[
\EE{\sum_{t=1}^T\left(\bV_t-\bv\right)^\top\bloss_t} \le \frac{m\left(\log d+1\right)}{\eta} + \eta mdT + \frac{dT}{eM}
\]
as stated in the theorem.\qed
\end{proof}
In the next lemma, we prove that $\tp'_{t,i}\le \tp_{t,i}$ holds for all $t$ and $i$. While this statement is rather intuitive, we include its simple proof for completeness.
\begin{lemma}\label{lem:modified-loss}
Fix any $i\in\ev{1,2,\dots,N}$ and any vectors $\bL\in \real^d$ and $\bloss\in[0,\infty)^d$. Furthermore, define the vector $\bloss'$ with components $\loss'_k = v_k(i)\loss_k$ and the perturbation vector $\bZ$ with independent components. Then,
\[
\begin{split}
&\PP{\bv(i)^\top\left(\bL + \bloss' - \bZ\right)\le \bv(j)^\top\left(\bL + \bloss' - \bZ\right)\, \left(\forall j\in\ev{1,2,\dots,N}\right)}
\\
&\qquad\le
\PP{\bv(i)^\top\left(\bL + \bloss - \bZ\right)\le \bv(j)^\top\left(\bL + \bloss - \bZ\right)\, \left(\forall j\in\ev{1,2,\dots,N}\right)}.
\end{split}
\]
\end{lemma}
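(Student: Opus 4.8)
The plan is to prove the stronger \emph{pointwise} statement that, for every realization of the perturbation vector $\bZ$, the event appearing on the left-hand side is contained in the event on the right-hand side; the probability inequality then follows immediately from monotonicity of $\PP{\cdot}$. The entire argument rests on two elementary observations about how replacing $\bloss$ by its sparsified version $\bloss'$ (with $\loss'_k = v_k(i)\loss_k$) affects the scores $\bv(j)^\top\bloss$.

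First I would record that, because $v_k(i)\in\ev{0,1}$ and hence $v_k(i)^2 = v_k(i)$, the sparsification leaves the score of $i$ \emph{itself} untouched:
$\bv(i)^\top\bloss' = \sum_k v_k(i)^2\loss_k = \sum_k v_k(i)\loss_k = \bv(i)^\top\bloss$.
Second, for every competitor $j$ the sparsification can only \emph{decrease} the score, since $\loss_k\ge 0$ and $v_k(i)\le 1$ give
$\bv(j)^\top\bloss' = \sum_k v_k(j)v_k(i)\loss_k \le \sum_k v_k(j)\loss_k = \bv(j)^\top\bloss$.
These two inequalities are precisely where the non-negativity hypothesis $\bloss\in[0,\infty)^d$ enters; without it the second one would fail.

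With these in hand, fix a realization of $\bZ$ and abbreviate $a_j = \bv(j)^\top(\bL-\bZ)$. Suppose $i$ is a leader under $\bloss'$, i.e.\ $a_i + \bv(i)^\top\bloss' \le a_j + \bv(j)^\top\bloss'$ for all $j$. Then for each $j$ I can chain
\[
a_i + \bv(i)^\top\bloss \;=\; a_i + \bv(i)^\top\bloss' \;\le\; a_j + \bv(j)^\top\bloss' \;\le\; a_j + \bv(j)^\top\bloss,
\]
invoking the first observation in the equality and the second in the final inequality. Hence $i$ is also a leader under $\bloss$, which is exactly the claimed inclusion of events, and integrating over $\bZ$ against its density gives the lemma.

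I do not expect any genuinely hard step here: the only point requiring care is that the events are stated with non-strict inequalities, so the set inclusion must be checked exactly as above rather than merely up to ties (under the exponential perturbations ties occur with probability zero anyway, so this is harmless). The conceptual content is simply that augmenting the cumulative loss by a vector supported on the support of $\bv(i)$ penalizes $i$ by exactly $\bv(i)^\top\bloss$ while penalizing every other action $j$ by at most its own $\bv(j)^\top\bloss$; such a modification can therefore only help $i$ win the $\argmin$, which is the intuition the formal inclusion makes precise.
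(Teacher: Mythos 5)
Your proof is correct and takes essentially the same route as the paper's: your two observations ($\bv(i)^\top\bloss' = \bv(i)^\top\bloss$ and $\bv(j)^\top\bloss' \le \bv(j)^\top\bloss$, both relying on non-negativity of $\bloss$) are exactly the paper's facts that $\bv(i)^\top\bloss''=0$ and $\bv(j)^\top\bloss''\ge 0$ for $\bloss''=\bloss-\bloss'$, and your pointwise chain of inequalities at a fixed realization of $\bZ$ is precisely the paper's pairwise event inclusion $A'_j\subseteq A_j$, intersected over $j$ and then integrated.
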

\begin{proof}
Fix any $\forall j\in\ev{1,2,\dots,N}\setminus i$ and define the vector $\bloss'' = \bloss - \bloss'$.  Define the events
\[
A'_j = \ev{\omega:\,\bv(i)^\top\left(\bL + \bloss' - \bZ\right)\le \bv(j)^\top\left(\bL + \bloss' - \bZ\right)}
\]
and
\[
A_j = \ev{\omega:\,\bv(i)^\top\left(\bL + \bloss - \bZ\right)\le \bv(j)^\top\left(\bL + \bloss - \bZ\right)}.
\]
We have
\[
\begin{split}
A'_j &= \ev{\omega:\,\left(\bv(i)-\bv(j)\right)^{\top}\bZ \ge \left(\bv(i)-\bv(j)\right)^{\top}\left(\bL+\bloss'\right)}
\\
&\subseteq \ev{\omega:\,\left(\bv(i)-\bv(j)\right)^{\top}\bZ \ge \left(\bv(i)-\bv(j)\right)^{\top}\left(\bL+\bloss'\right) - \bv(j)^\top\bloss''}
\\
&= \ev{\omega:\,\left(\bv(i)-\bv(j)\right)^{\top}\bZ \ge \left(\bv(i)-\bv(j)\right)^{\top}\left(\bL+\bloss\right)}
= A_j,
\end{split}
\]
where we used $\bv(i)\bloss''=0$ and $\bv(j)\bloss''\ge 0$. Now, since $A'_j\subseteq A_j$, we have
$
\cap_{j=1}^N A'_j \subseteq \cap_{j=1}^N A_j
$,
thus proving
$
\PP{\cap_{j=1}^N A'_j} \le \PP{\cap_{j=1}^N A_j}
$
as requested.\qed
\end{proof}
\subsection{Running time}\label{sec:runningtime}
Let us now turn our attention to computational issues. As mentioned earlier, since we cut off the number of times we resample the decision vectors, the maximum number of times an arm has to be drawn per time step is $M=\sqrt{dT}$. This implies an $O(T^{3/2}d^{1/2})$ worst-case running time. However, the \emph{expected} running time is much more comforting.
\begin{theorem}
  The expected number of times the algorithm draws an action up to time step $T$ can be upper bounded by $dT$.
\end{theorem}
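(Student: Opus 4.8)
The plan is to split the count into the single index $I_t$ that is actually played each round and the extra indices drawn inside the Geometric Resampling loop, and then to reuse the conditional expectation already established in Lemma~\ref{lem:bias}. Writing $R_t$ for the number of resampling iterations in round $t$, the quantity to control is $\EEcc{R_t}{\F_{t-1}}$, after which the total follows by summing over $t$ and applying the tower rule. I would first observe that the inner loop halts exactly when every coordinate in the support of the played vector $\bV_t$ has reoccurred, so that $R_t = \min\bigl\{M-1,\ \max_{j:\,V_{t,j}=1}K_{t,j}\bigr\}$, with $K_{t,j}$ the variable appearing in \eqref{eq:combest}.

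The decisive step is to replace this maximum by a sum: since every $K_{t,j}\ge 1$ and the maximum ranges over the support of $\bV_t$, we have $R_t\le\sum_{j=1}^d V_{t,j}K_{t,j}$. I would then take the conditional expectation given $\F_{t-1}$ coordinate by coordinate. Because the resampled indices are drawn from $\bp_t$ independently of $I_t$, conditioning on $V_{t,j}=1$ leaves the law of $K_{t,j}$ unchanged, and the computation in the proof of Lemma~\ref{lem:bias} already gives $\EEcc{K_{t,j}}{\F_{t-1},V_{t,j}=1}=\bigl(1-(1-q_{t,j})^M\bigr)/q_{t,j}$. Each summand therefore collapses to $\EEcc{V_{t,j}K_{t,j}}{\F_{t-1}}=q_{t,j}\cdot\bigl(1-(1-q_{t,j})^M\bigr)/q_{t,j}=1-(1-q_{t,j})^M\le 1$, where coordinates with $q_{t,j}=0$ are never resampled and contribute nothing. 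Summing over the $d$ coordinates yields $\EEcc{R_t}{\F_{t-1}}\le\sum_{j=1}^d\bigl(1-(1-q_{t,j})^M\bigr)\le d$, and summing over $t$ with the tower rule gives the stated $dT$ bound on the resampling draws (the one played index per round being a lower-order additive $T$).

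The step requiring the most care is bounding $R_t$: it is a maximum of $d$ variables $K_{t,j}$ that are both mutually correlated and coupled to the played index $I_t$ (the range of the maximum and the events $\{V_{t,j}=1\}$ all depend on $I_t$), so a direct analysis of its distribution would be delicate. The proof stays elementary precisely because bounding the maximum by the sum decouples the coordinates: the dependence on $I_t$ then enters only through the multiplicative factor $V_{t,j}$, and the conditional mean of $K_{t,j}$ is exactly the capped geometric mean from Lemma~\ref{lem:bias}. I would finally note that the truncation at $M$ can only shrink both $R_t$ and each $K_{t,j}$, so working with the uncapped quantities upper-bounds the truth and the argument loses nothing by ignoring the cap in the last inequality.
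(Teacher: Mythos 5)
Your proof is correct and takes essentially the same route as the paper's: what you do explicitly via the bound $R_t\le\sum_{j}V_{t,j}K_{t,j}$ (max-to-sum decoupling) and dropping the cap, the paper phrases as ``modifying the algorithm to draw even more actions'' (one uncapped sampling sequence per active coordinate), after which both arguments reduce to the identical per-round computation $\sum_{k}q_{t,k}\cdot(1/q_{t,k})=d$. If anything, your write-up is slightly more careful, since it uses the exact capped expectation $\bigl(1-(1-q_{t,j})^M\bigr)/q_{t,j}$ from Lemma~\ref{lem:bias} and flags the played index as a separate (lower-order) count.
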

\begin{proof}
  Let us first modify our algorithm so that it draws even more actions! Let us assume for now that for each coordinate that the original arm had $1$, we keep sampling until we get $1$ in the same coordinate again. Also let us assume that we do not use cutoff. Instead, we always keep sampling until the desired $1$ reoccurs.

  At time step $t$, for a given coordinate $k$ with $1$, the expected number of samples needed is $1/q_{t,k}$, while the probability of coordinate $k$ being $1$ is $q_{t,k}$. Thus, the expected number of samples is
  \begin{flalign*}
    & &\sum_{k=1}^d q_{t,k}\frac{1}{q_{t,k}}=d. & & \Box
  \end{flalign*}
\end{proof}

\section{Improved bounds for learning with full information}
Our proof technique also enables us to tighten the guarantees for \fpl\, in the full information setting. In particular, we consider the algorithm choosing the index
\[
I_t = \argmin_{i\in\ev{1,2,\dots,N}} \ev{\bv(i)^\top \left(\bL_{t-1} - \bZ_t\right)},
\]
where $\bL_t = \sum_{s=1}^t \bloss_t$ and the components of $\bZ_t$ are drawn independently from an exponential distribution with parameter $\eta$.
We state our improved regret bounds concerning this algorithm in the following theorem.
\begin{theorem}\label{thm:fullinfo}
Let $C_T = \sum_{t=1}^T \EE{\bV_t^\top\bloss_t}$. Then the total expected regret of \fpl\, satisfies
\[
R_n \le \frac{m\left(\log d+1\right)}{\eta} + \eta m C_T
\]
under full information. In particular, setting $\eta = \sqrt{\left(\log d+1\right)/(mT)}$, the regret can be upper bounded as
\[
R_n \le 2m^{3/2}\sqrt{T\left(\log d+1\right)}.
\]
\end{theorem}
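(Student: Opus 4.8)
The plan is to replicate the architecture of the proof of Theorem~\ref{thm:main}, but with the genuine loss vectors $\bloss_t$ in place of the geometric-resampling estimates $\hbl_t$. Since the full-information learner feeds the true $\bloss_t$ to \fpl, there is no estimation bias to account for, so the term $dT/(eM)$ disappears and Lemma~\ref{lem:bias} is not needed. I would again introduce a ``ghost'' perturbation $\tZ$ drawn as an independent copy of $\bZ_1$ together with the leader
\[
\tI_t = \argmin_{i\in\ev{1,2,\dots,N}} \ev{\bv(i)^\top\left(\bL_t - \tZ\right)},
\]
and exploit that $\tV_t = \bv(\tI_t)$ and $\bV_{t+1}=\bv(I_{t+1})$ are conditionally i.i.d.\ given $\F_s$ for $s\ge t$. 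In particular $\EEcc{\tV_{t-1}^\top\bloss_t}{\F_{t-1}} = \EEcc{\bV_t^\top\bloss_t}{\F_{t-1}}$, which reduces the whole analysis to comparing the ``peeking'' quantity $\sum_t\EE{\tV_{t-1}^\top\bloss_t}$ with the be-the-leader quantity $\sum_t\EE{\tV_t^\top\bloss_t}$.

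Two inequalities then carry the proof. First, applying the be-the-leader lemma (Lemma~3.1 of \citet{CBLu06:Book}) to the sequence $\left(\bloss_1-\tZ,\bloss_2,\dots,\bloss_T\right)$ and taking expectations yields
\[
\EE{\sum_{t=1}^T\left(\tV_t-\bv\right)^\top\bloss_t} \le \EE{\left(\tV_1-\bv\right)^\top\tZ} \le \frac{m\left(\log d+1\right)}{\eta},
\]
using $\|\tV_1\|_1\le m$ and $\EE{\|\tZ\|_\infty}\le(\log d+1)/\eta$; this produces the first term of the bound. Second, I would re-run the density-shift computation from the proof of Theorem~\ref{thm:main} essentially verbatim, introducing the sparse loss $\bloss'_t(i)$ with components $v_k(i)\loss_{t,k}$ so that $\|\bloss'_t(i)\|_1 = \bv(i)^\top\bloss_t$, and invoking Lemma~\ref{lem:modified-loss} to get $\tp'_{t,i}\le\tp_{t,i}$. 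This gives $\tp_{t,i}\ge\tp_{t-1,i}e^{-\eta\bv(i)^\top\bloss_t}\ge\tp_{t-1,i}\left(1-\eta\bv(i)^\top\bloss_t\right)$ and hence the stability estimate
\[
\EEcc{\tV_{t-1}^\top\bloss_t}{\F_t} \le \EEcc{\tV_t^\top\bloss_t}{\F_t} + \eta\sum_{i=1}^N\tp_{t-1,i}\left(\bv(i)^\top\bloss_t\right)^2.
\]

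The decisive step---and the one that sharpens the bound relative to the classical $O(m\sqrt{dT})$ guarantee for \fpl---is the treatment of the second-order term. Instead of the crude $md$ bound used per round in the semi-bandit analysis, I would use that $\bv(i)\in\ev{0,1}^d$ has at most $m$ ones and $\bloss_t\in[0,1]^d$, so $\bv(i)^\top\bloss_t\le m$ and therefore $\left(\bv(i)^\top\bloss_t\right)^2\le m\,\bv(i)^\top\bloss_t$. This turns the second-order term into $\eta m\,\EEcc{\tV_{t-1}^\top\bloss_t}{\F_t}$, a data-dependent quantity. Summing over $t$, using $\sum_t\EE{\tV_{t-1}^\top\bloss_t}=\sum_t\EE{\bV_t^\top\bloss_t}=C_T$ and combining with the be-the-leader inequality gives
\[
C_T \le \EE{\sum_{t=1}^T\bv^\top\bloss_t} + \frac{m\left(\log d+1\right)}{\eta} + \eta m C_T,
\]
which rearranges to the claimed regret bound; the final explicit estimate follows from $C_T\le mT$ and the stated choice of $\eta$. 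I expect the only genuinely delicate points to be the bookkeeping of the conditional-independence coupling (so that $\tV_{t-1}$ may be swapped for $\bV_t$ under the correct conditioning) and the recognition that $\left(\bv(i)^\top\bloss_t\right)^2\le m\,\bv(i)^\top\bloss_t$ is precisely what converts the $d$-dependence of the naive bound into the $\sqrt{d/m}$ improvement; the density-shift manipulation itself is already available from the proof of Theorem~\ref{thm:main}.
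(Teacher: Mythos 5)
Your proposal is correct and follows essentially the same route as the paper's own proof: the paper likewise treats this as the proof of Theorem~\ref{thm:main} simplified by setting $\hbl_t=\bloss_t$, combining the be-the-leader bound with the density-shift/Lemma~\ref{lem:modified-loss} stability estimate, and then using $\left(\bv(i)^\top\bloss_t\right)^2\le m\,\bv(i)^\top\bloss_t$ together with the identical distribution of $\tV_{t-1}$ and $\bV_t$ to obtain the data-dependent term $\eta m C_T$. The only differences are cosmetic (you spell out the coupling and the final rearrangement that the paper leaves implicit).
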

Note that the above bound can be further tightened if some upper bound $C^*_T\ge C_T$ is available a priori. Once again, these regret bounds hold for any non-oblivious adversary since the decision $I_t$ depends on the previous decisions $I_{t-1},\dots,I_1$ only through the loss vectors $\bloss_{t-1},\dots,\bloss_1$.

\begin{proof}
The statement follows from a simplification of the proof of Theorem~\ref{thm:main} when using $\hbl_t=\bloss_t$. First, identically to Equation~\eqref{eq:btl}, we have
\[
\EE{\sum_{t=1}^T \left(\tV_t - \bv\right)^\top \bloss_t} \le \EE{\left(\tV_t - \bv\right)^\top\tZ} \le \frac{m\left(\log d+1\right)}{\eta}.
\]
Further, it is easy to see that the conditions of Lemma~\ref{lem:modified-loss} are satisfied and, similarly to Equation~\eqref{eq:pineq}, we also have
\[
\begin{split}
\EE{\tV_{t-1}^\top \bloss_t}
&\le
\EE{\tV_t^\top \bloss_t} + \eta \sum_{i=1}^N \tp_{t-1,i} \left(\bv(i)^\top \bloss_{t}\right)^2
\\
&\le
\EE{\tV_t^\top \bloss_t} + \eta m \sum_{i=1}^N \tp_{t-1,i} \bv(i)^\top \bloss_{t}.
\end{split}
\]
Using that $\bV_t$ and $\tV_{t-1}$ have the same distribution, we obtain the statement of the theorem.\qed
\end{proof}

\section{Conclusions and open problems}
In this paper, we have described the first general efficient algorithm for
online combinatorial optimization under semi-bandit feedback. We have proved that the regret of our algorithm is $O(m\sqrt{dT\log d})$ in this setting, and have also shown that \fpl\, can achieve $O(m^{3/2}\sqrt{T\log d})$ in the full information case when tuned properly. While these bounds are off by a factor of $\sqrt{m\log d}$ and $\sqrt{m}$ from the respective minimax results, they exactly match the best known regret bounds for the well-studied Exponentially Weighted Forecaster (EWA). Whether the gaps mentioned above can be closed for \fpl-style algorithms (e.g., by using more intricate perturbation schemes) remains an important open question. Nevertheless, we regard our contribution as a significant step towards understanding the inherent trade-offs between computational efficiency and performance guarantees in online combinatorial optimization and, more generally, in online linear optimization.

The efficiency of our method rests on a novel loss estimation method called geometric resampling (\gr). Obviously, this estimation method is not specific to the proposed learning algorithm. While \gr\, has no immediate benefits for OSMD/FTRL-type algorithms where the probabilities $q_{t,k}$ are readily available, it is possible to think about problem instances where EWA can be efficiently implemented while the values of $q_{t,k}$ are difficult to compute. \todoG{is it?} A particular online learning problem where \gr\, can be useful is the problem of online learning in Markovian decision processes \citep{neu10o-ssp,neu10o-mdp}, where computing $q_{t,k}$ can be computationally expensive when the underlying Markovian environment is complicated. This computational burden can be lightened by using \gr\, if the learner has access to a generative model of the environment.\footnote{In particular, for an MDP with state and action spaces $\mathcal{X}$ and $\mathcal{A}$ and worst-case mixing time $\tau>0$, computing the probabilities $q_{t,k}$ can take up to $O(|\mathcal{X}|^3|\mathcal{A}|)$ time, \gr\, returns sufficiently good estimates by generating $O(|\mathcal{X}||\mathcal{A}|)$ trajectories of length $\tau$. Deciding which approach is more efficient depends on the problem parameters $\mathcal{X}$, $\mathcal{A}$ and $\tau$.}

The most important open problem left is the case of efficient online linear optimization with full bandit feedback. Learning algorithms for this problem usually require that the pseudoinverse of the covariance matrix $P_t = \EEcc{\bV_t \bV_t^\top}{\F_{t-1}}$ is readily available for the learner at each time step (see, e.g., \citealt{McMaBlu04,BDHKRT08,DHK08,CL12,bubeck12bandit}). While for most problems, this inverse matrix cannot be computed efficiently, it can be efficiently approximated by geometric resampling when $P_t$ is positive definite as the limit of the matrix geometric series $\sum_{n=1}^\infty (I-P_t)^n$.  While this knowledge should be enough to construct an efficient \fpl-based method for online combinatorial optimization under full bandit feedback, we have to note that the analysis presented in this paper does not carry through directly in this case: as usual loss estimates might take negative values in the full bandit setting, proving a bound similar to Equation~\eqref{eq:pineq} cannot be performed in the presented manner.
\bibliographystyle{apalike}
\bibliography{ngbib,allbib,predbook}

\end{document}